\def\BibTeX{{\rm B\kern-.05em{\sc i\kern-.025em b}\kern-.08em
    T\kern-.1667em\lower.7ex\hbox{E}\kern-.125emX}}
\newcommand{\bh}{\mathbf{h}}
\newcommand{\bm}{\mathbf{m}}
\newcommand{\bx}{\mathbf{x}}
\newcommand{\bH}{\mathbf{H}}
\newcommand{\bJ}{\mathbf{J}}
\newtheorem{theorem}{Theorem}
\newcommand\copyrighttext{%
  \footnotesize \textcopyright 2025 IEEE. Personal use of this material is permitted. Permission from IEEE must be obtained for all other uses, in any current or future media, including reprinting/republishing this material for advertising or promotional purposes, creating new collective works, for resale or redistribution to servers or lists, or reuse of any copyrighted component of this work in other works. DOI: \href{https://doi.org/10.1109/IJCNN64981.2025.11227859}{10.1109/IJCNN64981.2025.11227859}.}
\newcommand\copyrightnotice{%
\begin{tikzpicture}[remember picture,overlay]
\node[anchor=south,yshift=10pt] at (current page.south) 
  {\fbox{\parbox{\dimexpr\textwidth-\fboxsep-\fboxrule\relax}{\copyrighttext}}};
\end{tikzpicture}%
}
\begin{document}

\title{Residual Reservoir Memory Networks
\thanks{Published in IJCNN2025 - Special Session on Reservoir computing in the deep learning era: theory, models, applications, and hardware implementations. This work has been supported by NEURONE, a project funded by the European Union - Next Generation EU, M4C1 CUP I53D23003600006, under program PRIN 2022 (prj. code 20229JRTZA), and by EU-EIC EMERGE (Grant No. 101070918).}
}

\author{
\IEEEauthorblockN{1\textsuperscript{st} Matteo Pinna}
\IEEEauthorblockA{\textit{Department of Computer Science} \\
\textit{University of Pisa} \\
Pisa, Italy \\
matteo.pinna@di.unipi.it}
\and
\IEEEauthorblockN{2\textsuperscript{nd} Andrea Ceni}
\IEEEauthorblockA{\textit{Department of Computer Science} \\
\textit{University of Pisa} \\
Pisa, Italy \\
andrea.ceni@di.unipi.it}
\and
\IEEEauthorblockN{3\textsuperscript{rd} Claudio Gallicchio}
\IEEEauthorblockA{\textit{Department of Computer Science} \\
\textit{University of Pisa} \\
Pisa, Italy \\
claudio.gallicchio@unipi.it}
}

\maketitle
\copyrightnotice
\begin{abstract}
We introduce a novel class of untrained Recurrent Neural Networks (RNNs) within the Reservoir Computing (RC) paradigm, called Residual Reservoir Memory Networks (ResRMNs). ResRMN combines a linear memory reservoir with a non-linear reservoir, where the latter is based on residual orthogonal connections along the temporal dimension for enhanced long-term propagation of the input. The resulting reservoir state dynamics are studied through the lens of linear stability analysis, and we investigate diverse configurations for the temporal residual connections. The proposed approach is empirically assessed on time-series and pixel-level 1-D classification tasks. Our experimental results highlight the advantages of the proposed approach over other conventional RC models\footnote{Code is available at \href{https://github.com/NennoMP/residualrmn}{github.com/NennoMP/residualrmn}.}.
\end{abstract}

\begin{IEEEkeywords}
Recurrent Neural Networks, Reservoir Computing, Echo State Networks, Time-series classification
\end{IEEEkeywords}

\section{Introduction}
The Reservoir Computing (RC) paradigm\cite{nakajima2021reservoir, lukovsevivcius2009reservoir} is a unique approach for the design of untrained Recurrent Neural Networks (RNNs), popular for its computational efficiency. Training is limited to a simple linear readout and can be performed by means of light-weight closed-form solutions (e.g., ridge regression). The RNN component, commonly called the \emph{reservoir}, is left untrained after its random initialization. The theoretical framework on which RC lays its foundations makes these systems particularly suitable for implementation through physical systems, leading to research efforts in neuromorphic computing\cite{tanaka2019recent, yan2024emerging} and more specifically in RC-based nanowire networks\cite{milano2022materia, pistolesi2024nanowire}. Another prominent research direction has focused on developing increasingly advanced RC-based architectures \cite{gallicchio2017deepesn, gallicchio2024euler, ceni2024residual} to tackle the persistent challenge of learning long-term dependencies in sequential data, a common limitation affecting recurrent models.

In this paper, we introduce \emph{Residual Reservoir Memory Networks} (ResRMNs). ResRMNs are characterized by a hierarchical and modular structure, combining a linear reservoir and a non-linear reservoir. Conceptually, the linear reservoir is designed to enhance memorization capabilities, while the non-linear reservoir, whose implementation in this work is based on temporal residual connections, is designed to capture complex, non-linear dependencies.
We study different configurations for the temporal residual connections, including orthogonal matrices with either random or fixed structure. Additionally, we provide a characterization of the spectrum of ResRMN's Jacobian in Th. \ref{thm:spectrum}, and analyze stability properties through the lens of linear stability analysis.
Through extensive empirical evaluation on time-series classification and pixel-level 1-D classification tasks, we demonstrate the advantages of the proposed approach over other RC-based models.

The rest of this paper is organized as follows. Section \ref{sec:background} provides an overview of relevant related works within the RC paradigm. The proposed approach, ResRMN, is introduced in Section \ref{sec:resrmn}. Section \ref{sec:stability} is dedicated to a linear stability analysis of the dynamics of ResRMN. Section \ref{sec:experiments} is dedicated to the experiments. Finally, Section \ref{sec:conclusions} concludes the paper and suggests future research directions.

\section{Reservoir Computing}
\label{sec:background}
Echo State Networks (ESNs) \cite{jaeger2004harnessing} are probably one of the most popular types of RC models. They are based on an untrained reservoir and a trainable readout layer, and the reservoir is generally initialized with respect to some stability constraints. One of the most basic forms of ESNs is the \emph{Leaky Echo State Network} (LeakyESN)\cite{jaeger2007leakyesn}. The state update function of a LeakyESN reads:
\begin{equation}
    \mathbf{h}(t) = (1 - \tau) \mathbf{h}(t - 1) + \tau \tanh \big( \mathbf{W}_{h}\,\mathbf{h}(t - 1) + \mathbf{W}_{x}\,\mathbf{x}(t) + \mathbf{b} \big),
    \label{eq:leakyesn}
\end{equation}
where $\mathbf{h}(t) \in \mathbb{R}^{N_h}$ and $\mathbf{x}(t) \in \mathbb{R}^{N_x}$ denote the state and the external input at time step $t$, respectively. Additionally, $\mathbf{W}_{h} \in \mathbb{R}^{N_{h} \times N_{h}}$ is the recurrent weight matrix, $\mathbf{W}_{x} \in \mathbb{R}^{N_{h} \times N_{x}}$ is the input weight matrix, $\mathbf{b} \in \mathbb{R}^{N_{h}}$ is the bias vector, and $\tau \in (0, 1]$ is the leaky rate. The weight matrices and bias vector are randomly initialized and left untrained. Additionally, the recurrent weight matrix $\mathbf{W}_{h}$ is rescaled to match a specific spectral radius, denoted as $\rho$, which is a crucial hyperparameter that determines whether the dynamics of the resulting model are stable or unstable. The readout layer can be expressed as $\mathbf{y}(t) = \mathbf{W}_{o} \mathbf{h}(t)$, where $\mathbf{y}(t) \in \mathbb{R}^{N_o}$ denotes the outputs at time step $t$ and $\mathbf{W}_{o} \in \mathbb{R}^{N_o \times N_h}$ denotes the readout weight matrix. The weights within matrix $\mathbf{W}_{o}$ constitute the only trainable parameters in the entire model.

Other relevant approaches worth mentioning are \emph{Residual Echo State Networks} (ResESNs)\cite{ceni2024residual} and \emph{Reservoir Memory Networks} (RMNs)\cite{gallicchioreservoir}. ResESNs extend the LeakyESN by integrating various configurations of temporal residual connections. Additionally, the leaky rate is replaced by two independent scaling coefficients that weight the linear and non-linear branches. RMNs consist of a linear reservoir and a non-linear LeakyESN, where the former influences the dynamics of the latter. In particular, the linear reservoir is constrained to a ring topology, which has been shown to provide noticeable advantages compared to random initializations \cite{rodan2010minimum, verzelli2020input, tino2020dynamical}.

\section{Residual Reservoir Memory Network}
\label{sec:resrmn}
The structure of ResRMN involves a dual-reservoir approach, where a linear memory reservoir and non-linear reservoir are combined. The non-linear module is implemented as a ResESN, a particular type of untrained RNN based on temporal residual connections. The linear module is conceptually optimized for long-term memorization, while the non-linear module is better suited for handling complex dependencies. Fig. \ref{fig:resrmn_architecture} graphically illustrates the proposed architecture. The memory reservoir is a linear processing module driven solely by the external input, while the ResESN module receives in input both the external input and the output states of the memory reservoir and it non-linearly integrates their sums with its hidden state.
The ResESN's output is then fed to a readout layer, which is the only component being trained in the model.

\begin{figure}[t]
    \centering
    \includegraphics[width=\columnwidth]{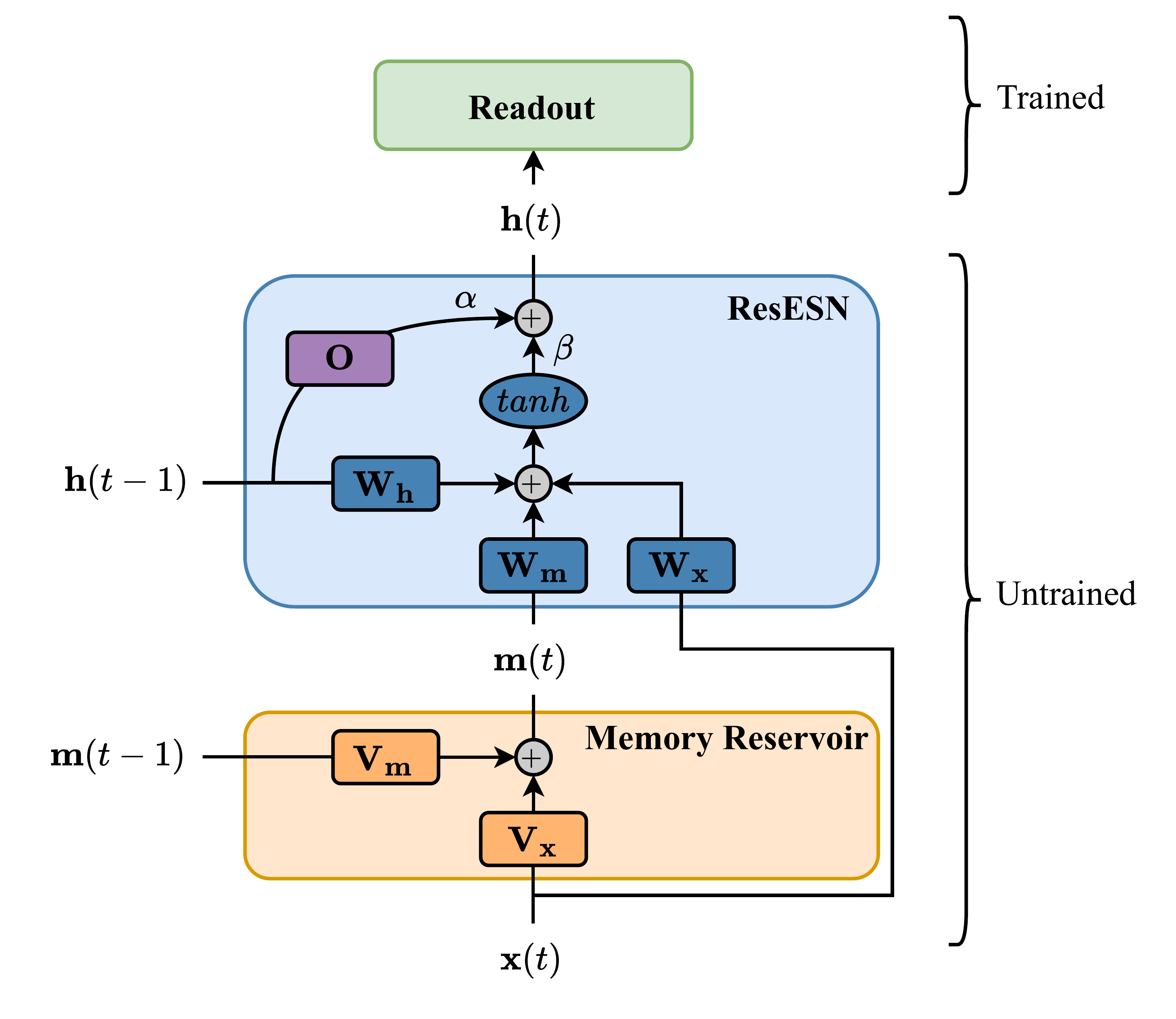}
    \caption{The architecture of a Residual Reservoir Memory Network (ResRMN), assuming the hyperbolic tangent $tanh$ as the activation function and an orthogonal matrix $\mathbf{O}$ in the residual branch of the Residual Echo State Network (ResESN). The architecture consists of two untrained components: (1) a linear memory reservoir driven by the external input $\mathbf{x}$, and (2) a non-linear residual reservoir driven by both the external input $\mathbf{x}$ and the output of the memory reservoir $\mathbf{m}$. The final output is fed to a readout layer, which is the only trainable component.}
    \label{fig:resrmn_architecture}
\end{figure}

In the context of the memory reservoir, we denote $N_x$ and $N_m$ as the number of input neurons and the hidden size, respectively. The linear memory reservoir state update function can be expressed as:
\begin{equation}
\mathbf{m}(t) = \mathbf{V}_{m}\mathbf{m}(t - 1) + \mathbf{V}_{x}\mathbf{x}(t),
\label{eq:linear_reservoir}
\end{equation}
where $\mathbf{V}_{m} \in \mathbb{R}^{N_m \times N_m}$ and $\mathbf{V}_{x} \in \mathbb{R}^{N_m \times N_x}$ are the recurrent weight matrix and the input weight matrix, respectively.

In the context of the ResESN module, we denote $N_h$ as the number of recurrent neurons in the reservoir. The non-linear state update function of ResESN reads:
\begin{align}
    \mathbf{h}(t) = \alpha\,\mathbf{O}\,\mathbf{h}(t - 1) + \beta\,\tanh \big(& \mathbf{W}_{h}\,\mathbf{h}(t - 1) + \mathbf{W}_{m}\,\mathbf{m}(t) \notag \\ 
    &+ \mathbf{W}_{x}\,\mathbf{x}(t) + \mathbf{b}_{h} \big),
    \label{eq:nonlinear_reservoir}
\end{align}
where $\mathbf{O} \in \mathbb{R}^{N_h \times N_h}$ is an orthogonal matrix, $\mathbf{W}_{h} \in \mathbb{R}^{N_h \times N_h}$ is the recurrent weight matrix, $\mathbf{W}_{m} \in \mathbb{R}^{N_h \times N_m}$ is the memory weight matrix, $\mathbf{W}_{x} \in \mathbb{R}^{N_h \times N_x}$ is the input weight matrix, and $\mathbf{b}_{h} \in \mathbb{R}^{N_{h}}$ is the bias vector. Additionally, $\alpha \in [0, 1]$ and $\beta \in (0, 1]$ are two independent scaling coefficients.

Similarly to \cite{ceni2024residual}, we will consider different ResRMN configurations based on the specific structure employed for the orthogonal matrix $\mathbf{O}$. More specifically, we will consider a random variant ResRMN$_{\mathrm{R}}$ where matrix $\mathbf{O}$ is a random orthogonal matrix obtained via QR decomposition of a random matrix $\mathbf{M}^{N_h \times N_h}$ with i.i.d entries in $[-1, 1)$, and two structured configurations ResRMN$_{\mathrm{C}}$ and ResRMN$_{\mathrm{I}}$ where the orthogonal matrix is implemented as a cyclic orthogonal matrix $\mathbf{C}$, whose structure is highlighted in (\ref{eq:cyclic_ortho_matrix}), and as the identity matrix $\mathbf{I}$, respectively. We choose these three configurations to cover a broad spectrum of dynamical behaviors \cite{jaeger2001memory}. In particular, the identity matrix is expected to provide relatively low memory capacity, while the other orthogonal configurations are expected to maximize it \cite{ceni2024edge}.

\begin{equation}
    \mathbf{C} = 
    \begin{bmatrix}
        0 & 0 & \cdots & 0 & 1 \\
        1 & 0 & \cdots & 0 & 0 \\
        0 & 1 & \cdots & 0 & 0 \\
        \vdots & \vdots & \ddots & \vdots & \vdots \\
        0 & 0 & \cdots & 1 & 0
    \end{bmatrix}
    \label{eq:cyclic_ortho_matrix}
\end{equation}
Note that the configuration with the identity matrix, ResRMN$_{\mathrm{I}}$, becomes equivalent to an RMN when $\alpha = 1 - \beta$. Furthermore, a ResRMN reduces to a simple ResESN when either the number of memory neurons $N_{m}$ equals $0$ or the memory weight matrix $W_{m}$ is a zero matrix, as the memory module would not contribute to the resulting dynamics.

\begin{figure}[t]
    \includegraphics[scale=1.0, left]{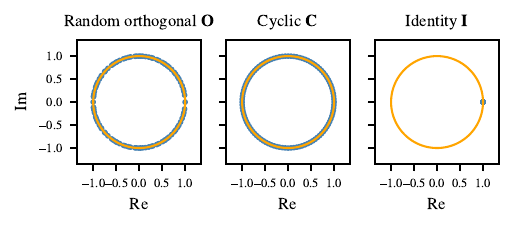}
    \caption{Eigenvalues of the different orthogonal matrices considered in the non-linear module, assuming a hidden size of $N_{h} = 100$. In orange the unitary circle.}
    \label{fig:ortho_eigenvalues}
\end{figure}

Following the approach employed in \cite{gallicchioreservoir}, the memory reservoir's recurrent weight matrix $\mathbf{V}_{m}$ adopts the same cyclic structure highlighted in (\ref{eq:cyclic_ortho_matrix}). Fig. \ref{fig:ortho_eigenvalues} visualizes, with respect to the unitary circle, the eigenvalue distribution of the three orthogonal matrices considered. We observe that the random orthogonal matrix and the cyclic orthogonal matrix exhibit similar distribution spread around the unitary circle, while the identity matrix's distribution is characterized by zero imaginary part and strictly positive real part. These differences, in addition to their different structure and sparsity, may effectively introduce different architectural biases in the model's dynamics. Therefore, the different configurations are worth investigating.

\section{Linear Stability Analysis}
\label{sec:stability}
In this section, we perform a linear stability analysis of ResRMN. Linear stability analysis is a fundamental technique used to evaluate the stability of equilibrium states in dynamical systems. By analyzing small perturbations around a reference trajectory, it is possible to gain critical insights into whether the system will stay around the reference trajectory, or exhibit instability.
To achieve this, we linearize the governing equations of the system by computing the Jacobian matrix, which represents the first-order partial derivatives of the system's vector field with respect to its state variables. The Jacobian, denoted as $\mathbf{J}$, captures how infinitesimal perturbations propagate in the system.
The stability of the system is then determined by the eigenvalues of the Jacobian matrix. Specifically, we compute the spectral radius, which is the largest absolute value of the eigenvalues of $\mathbf{J}$. If the spectral radius is less than $1$, all perturbations decay over time, indicating that the system is locally stable around the reference trajectory. Conversely, if the spectral radius is greater than or equal to $1$, at least one perturbation grows over time, leading to instability.
Thus, by evaluating the eigenvalues of the linearized system, we can determine whether small disturbances will decay (signifying stability) or amplify (indicating instability). This analysis provides valuable insights into the robustness of the equilibrium state and helps assess the resilience of the system against small fluctuations.
We denote $ \bH(t)=  \begin{pmatrix} \bm(t) \\ \bh(t) \end{pmatrix}$, then the global state-update function can be described as:
\begin{equation}
    \label{eq:state_update}
    \bH(t) = \begin{pmatrix} \mathbf{F}_{\bx(t)}^{(1)}(\bm(t-1)) \\ \mathbf{F}_{\bx(t)}^{(2)}\Bigl(\bh(t-1),\mathbf{F}_{\bx(t)}^{(1)}\bigl(\bm(t-1)\bigl)\Bigl) \end{pmatrix},
\end{equation}
where $ \mathbf{F}_{\bx(t)}^{(1)} $ represents the linear memory reservoir state update function defined in (\ref{eq:linear_reservoir}), and $ \mathbf{F}_{\bx(t)}^{(2)} $ represents the non-linear state update function of the ResESN module defined in (\ref{eq:nonlinear_reservoir}).
We aim to compute the Jacobian of ResRMN, which reads:
\begin{align}
    \label{eq:jacobian}
    \bJ_{\text{ResRMN}} &= \left[\begin{array}{cc}
    \frac{\partial \bm(t)}{\partial  \bm(t-1) } & \frac{\partial  \bm(t)}{\partial  \bh(t-1) } \\
    \frac{\partial \bh(t)}{\partial \bm(t-1) } &  \frac{\partial \bh(t)}{\partial \bh(t-1) }
  \end{array}\right] \notag \\[0.1cm]
  &= \left[\begin{array}{cc}
    \frac{\partial \mathbf{F}_{\bx(t)}^{(1)}}{\partial  \bm(t-1) } &  \mathbf{0} \\
    \frac{\partial \mathbf{F}_{\bx(t)}^{(2)}}{\partial \bm(t-1) } &  \frac{\partial  \mathbf{F}_{\bx(t)}^{(2)}}{\partial \bh(t-1) }
  \end{array}\right] \notag \\[0.1cm]
  &= \left[\begin{array}{cc}
     \mathbf{V}_{m} &  \mathbf{0} \\
     \beta\mathbf{D}_t \mathbf{W}_{m} \mathbf{V}_{m} &   \alpha\mathbf{O} + \beta  \mathbf{D}_t  \mathbf{W}_{h}
  \end{array}\right],
\end{align}
where $\mathbf{D}_t = \text{diag}\Bigl(1-\tanh^2\bigl( \mathbf{W}_{h}\,\mathbf{h}(t - 1) + \mathbf{W}_{m}\,\mathbf{m}(t) + \mathbf{W}_{x}\,\mathbf{x}(t) + \mathbf{b}_{h} \bigl)\Bigl)$.

Due to the hierarchical structure of ResRMN, its Jacobian is a lower triangular block matrix. This leads to the following implications for the eigenvalues and spectral radius of ResRMN.
\begin{theorem}
    \label{thm:spectrum}
The set of eigenvalues of ResRMN is the union of the set of $N_m$ eigenvalues of the linear memory reservoir described by (\ref{eq:linear_reservoir}) and the set of $N_h$ eigenvalues of the non-linear module described by (\ref{eq:nonlinear_reservoir}). In particular, the spectral radius of ResRMN, denoted $\rho_{ResRMN}$, is the maximum spectral radius among the two modules. More formally:
\begin{equation}
    \rho_{\text{ResRMN}} = \max \big( \rho(\mathbf{V}_{m}), \, \rho( \alpha\mathbf{O} + \beta  \mathbf{D}_t  \mathbf{W}_{h}) \big).
\end{equation}
\end{theorem}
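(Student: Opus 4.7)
The plan is to exploit the block lower triangular structure of the Jacobian $\bJ_{\text{ResRMN}}$ already derived in (\ref{eq:jacobian}), and invoke the classical fact that the spectrum of a block triangular matrix is the union of the spectra of its diagonal blocks. Concretely, I write $\bJ_{\text{ResRMN}}$ as a $2\times 2$ block matrix with diagonal blocks $A_1 = \mathbf{V_m} \in \mathbb{R}^{N_m \times N_m}$ and $A_2 = \alpha\mathbf{O} + \beta\mathbf{D}_t\mathbf{W_h} \in \mathbb{R}^{N_h \times N_h}$, with the upper-right block equal to $\mathbf{0}$ and the lower-left block equal to $\beta\mathbf{D}_t\mathbf{W_m}\mathbf{V_m}$.

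The key step is to establish the factorization
\begin{equation*}
\det(\bJ_{\text{ResRMN}} - \lambda \bI_{N_m+N_h}) \;=\; \det(A_1 - \lambda \bI_{N_m}) \cdot \det(A_2 - \lambda \bI_{N_h}),
\end{equation*}
which follows immediately from the Schur complement formula applied to a block triangular matrix (no correction term arises, since the upper-right block is identically zero), or equivalently from a direct Leibniz expansion of the determinant restricted to permutations that avoid the zero block. The characteristic polynomial of $\bJ_{\text{ResRMN}}$ therefore splits into the product of the characteristic polynomials of $\mathbf{V_m}$ and $\alpha\mathbf{O} + \beta\mathbf{D}_t\mathbf{W_h}$, so its set of roots (with algebraic multiplicities summing to $N_m+N_h$) is exactly the union of the two diagonal-block spectra, proving the first assertion.

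The spectral-radius statement is then a one-line consequence: since $\rho(\cdot)$ is the maximum modulus over the spectrum and the spectrum is a union, we obtain $\rho_{\text{ResRMN}} = \max\bigl(\rho(\mathbf{V_m}),\,\rho(\alpha\mathbf{O} + \beta\mathbf{D}_t\mathbf{W_h})\bigr)$, as claimed. The argument is essentially bookkeeping around a classical linear algebra identity, so there is no genuine obstacle; the only point worth emphasizing is that the off-diagonal coupling block $\beta\mathbf{D}_t\mathbf{W_m}\mathbf{V_m}$ does not influence the eigenvalues of the global Jacobian, a structural consequence of the fact that the memory reservoir drives the non-linear reservoir but not vice versa.
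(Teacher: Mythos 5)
Your proposal is correct and follows essentially the same route as the paper: both exploit the block lower triangular structure of the Jacobian in (\ref{eq:jacobian}) and the factorization of its characteristic polynomial into those of the diagonal blocks $\mathbf{V_m}$ and $\alpha\mathbf{O} + \beta\mathbf{D}_t\mathbf{W_h}$, from which the union of spectra and the max of spectral radii follow. The only difference is cosmetic: you additionally sketch a justification of the determinant identity (Schur complement or Leibniz expansion), which the paper simply asserts.
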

\begin{proof}
For triangular matrices $\mathbf{M} = \left[\begin{array}{cc}
    \mathbf{A}_{0,0} & \mathbf{0} \\
    \mathbf{A}_{1, 0} &  \mathbf{A}_{1, 1}
  \end{array}\right]$ the following relation holds:
$$
\det(\mathbf{M} - \lambda \mathbf{I}) = \det(\mathbf{A}_{0, 0} - \lambda \mathbf{I})\det(\mathbf{A}_{1, 1} - \lambda \mathbf{I}).
$$
It follows that the set of eigenvalues of $\mathbf{M}$ is the union of the sets of eigenvalues of the matrices along the diagonal of $\mathbf{M}$, i.e. $\mathbf{A}_{0, 0}$ and $\mathbf{A}_{1, 1}$. Therefore, it also follows that $\rho(M) = \max ( \rho(\mathbf{A}_{0, 0}), \, \rho(\mathbf{A}_{1, 1}) )$.
\end{proof}

As usual in RC, a necessary condition for stability is imposed, specifically that the origin is a stable equilibrium point for the case of zero input and zero bias \cite{jaeger2001echo}. 
In this regard, Theorem \ref{thm:spectrum} allows us to derive a necessary condition for stability of the ResRMN model.
Namely, a necessary condition for the ResRMN model to be stable is that both its linear and non-linear modules satisfy the necessary condition of stability at the origin with zero input and zero bias, 
i.e. $\rho(\mathbf{V}_{m})\leq 1$ and 
$\rho( \alpha\mathbf{O} + \beta  \mathbf{W}_{h}) \leq 1$.\footnote{For the case of zero input and zero bias, the matrix $\mathbf{D}_t$ becomes the identity matrix.}
This follows directly from the hierarchical and modular nature of ResRMN, where both of its modules contribute to the overall dynamical behavior of the system, and its spectral radius being expressed in terms of the spectral radii of its modules. This result underscores the importance of carefully designing both the linear and non-linear components to ensure overall stability. 
In Section \ref{sec:experiments}, we leverage these theoretical insights to guide the exploration of ResRMN's hyperparameters.

Note that, in our implementation, the recurrent weight matrix of the linear module is always initialized to a cyclic orthogonal structure, and has a spectral radius of $1$. Therefore, the resulting model is always characterized by a spectral radius of at least $1$, possibly larger depending on the tuning of the non-linear module. This ensures that ResRMN consistently operates at the edge of stability, a computational regime particularly well-suited for time-series classification, where relevant information may be distributed throughout the entire sequence \cite{ceni2024edge,bertschinger2004real}.

\begin{figure}[t]  
    \includegraphics[scale=1.0, left]{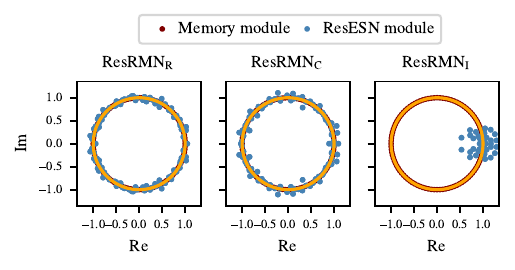}
    \caption{Eigenvalues of the Jacobian for different ResRMN configurations. The dynamics are driven by a random input vector and a random state, both uniformly distributed in $(-1, 1)$. We assume hidden sizes $N_{m}, N_{h} = 100$, a spectral radius $\rho = 1$, all input weight matrices with scaling of $1$ (i.e., $\omega_{x}, \omega_{x_{m}}, \omega_{m} = 1$), zero bias $\omega_{b} = 0$, and scaling coefficients $\alpha, \beta = 1$. In red the $N_{m}$ eigenvalues of the linear memory module, in blue the $N_{h}$ eigenvalues of the ResESN module. In orange the unitary circle.}
    \label{fig:jacobian_eigenvalues}
\end{figure}

\subsection{Eigenspectrum analysis}
The different ResRMN configurations (random orthogonal, cyclic orthogonal, and identity) are intentionally selected so that we may have similar or dissimilar eigenspectra between the linear and non-linear modules. Specifically, similar eigenspectra are expected when the ResESN module employs either the cyclic orthogonal matrix, which matches the structure of the memory cell's recurrent weight matrix $\mathbf{V}_{m}$, or a random orthogonal matrix, which exhibits a very similar eigenvalues' distribution to that of the cyclic orthogonal matrix (see Fig. \ref{fig:ortho_eigenvalues}). Conversely, when the ResESN module employs the identity matrix, the modules should exhibit less redundancy in terms of their eigenspectra due to the dissimilar eigenspectrum that characterizes the identity matrix. Therefore, conceptually, ResRMN$_{\mathrm{I}}$ may capture a broader diversity of representations compared to the other configurations. In Fig. \ref{fig:jacobian_eigenvalues} we visualize the eigenspectrum of the Jacobian defined in (\ref{eq:jacobian}) for each ResRMN configuration, differentiating between the $N_{m}$ eigenvalues of the memory module and the $N_{h}$ eigenvalues of the non-linear module. Trivially, the set of $N_{m}$ eigenvalues of the linear module are spread around the unitary circle, due to its linear dynamics and its recurrent weight matrix being initialized to a cyclic orthogonal structure. Conversely, the distribution of the $N_{h}$ eigenvalues of the non-linear module varies depending on which orthogonal matrix is employed in the temporal residual connections. The random orthogonal and the cyclic orthogonal configurations are characterized by eigenvalues that spread around a $\beta \lVert \mathbf{W}_{h} \rVert$-neighborhood of the circle of radius $\alpha$. The identity configuration is characterized by a more skewed eigenvalues' distribution, where the eigenvalues fall in a $\beta \lVert \mathbf{W}_{h} \rVert$-neighborhood of $\alpha$. 

In future work, this analysis could be complemented by exploring the eigenspectra in polar form, investigating the differences between each configuration from the perspective of eigenvalues' magnitudes and angles, rather than their real and imaginary parts.

\section{Experiments}
\label{sec:experiments}
In this section we validate the proposed approach on time-series classification and pixel-level 1-D classification tasks. More specifically, we consider the following time-series classification datasets from the UEA \& UCR repository\footnote{\href{https://www.timeseriesclassification.com}{https://www.timeseriesclassification.com}}\cite{dau2019ucr}: Adiac, Beef, Blink, Car, DuckDuckGeese (DDG), FordA, FordB, HandMovementDirection (HMD), Libras, Mallat, OSULeaf and Wine. Additionally, we consider the permuted sequential MNIST (psMNIST) pixel-level 1-D classification task, where the traditional MNIST dataset \cite{lecun1998mnist} is flattened to a vector of pixels, so that it can be processed as a sequence, and then randomly permuted. Table \ref{tab:datasets} provides a summary of each dataset.

\begin{table}[t]
\caption{Summary of the classification datasets.}
\begin{center}
\begin{tabular}{llllll}
\toprule
Dataset                 & Train size & Test size & Length & \# Features & \# Classes \\
\midrule
\textbf{Adiac}          & 390       & 391     & 176     & 1     & 37\\
\textbf{Beef}           & 30        & 30      & 470     & 1     & 5 \\
\textbf{Blink}          & 500       & 450     & 510     & 4     & 2 \\
\textbf{Car}            & 60        & 60      & 577     & 1     & 4 \\   
\textbf{DDG}            & 60        & 40      & 270     & 1345  & 5 \\
\textbf{FordA}          & 3601      & 1320    & 500     & 1     & 2 \\
\textbf{FordB}          & 3636      & 810     & 500     & 1     & 2 \\
\textbf{HMD}            & 160       & 74      & 400     & 10    & 4 \\
\textbf{Libras}         & 180       & 180     & 45      & 2     & 15 \\
\textbf{Mallat}         & 55        & 2345    & 1024    & 1     & 8 \\
\textbf{psMNIST}        & 60000     & 10000   & 784     & 1     & 10 \\
\textbf{OSULeaf}        & 200       & 242     & 427     & 1     & 6 \\     
\textbf{Wine}           & 57        & 54      & 234     & 1     & 2 \\     
\bottomrule
\end{tabular}
\label{tab:datasets}
\end{center}
\end{table}  

\begin{table}[t]
\caption{Model selection hyperparameters.}
\begin{center}
\begin{tabular}{ll}
\toprule
Hyperparameters & Values \\
\midrule
$\omega_{x_{m}}$ & $[0.01, 0.1, 1]$ \\
$\omega_x$ & $[0.01, 0.1, 1]$ \\
$\omega_{m}$ & $[0.01, 0.1, 1]$ \\
$\omega_b$ & $[0, 0.01, 0.1, 1]$ \\
$\rho$ & $[0.9, 1, 1.1]$ \\
\midrule
\textbf{(Leaky variants)} & \\
$\tau$ & $[0.01, 0.1, 0.5, 0.9, 0.99, 1]$ \\
\midrule
\textbf{(Residual variants)} & \\
$\alpha$ & $[0, 0.01, 0.1, 0.5, 0.9, 0.99, 1]$ \\
$\beta$ & $[0.01, 0.1, 0.5, 0.9, 0.99, 1]$ \\
\bottomrule
\end{tabular}
\label{tab:model_selection}
\end{center}
\end{table}

\begin{table*}[htbp]
\caption{Performance achieved on the test set of time-series classification tasks. Reported results represent mean and standard deviation over 10 different random initializations. The best overall result is highlighted in blue.}
\begin{center}
\begin{tabular}{llllll|lll}
\toprule
Classification ($\uparrow$) & LeakyESN & ResESN$_{\mathrm{R}}$ & ResESN$_{\mathrm{C}}$ & ResESN$_{\mathrm{I}}$ & RMN & ResRMN$_{\mathrm{R}}$ & ResRMN$_{\mathrm{C}}$ & ResRMN$_{\mathrm{I}}$ \\
\midrule
\textbf{Adiac} & $56.8_{\pm 0.9}$ & $55.2_{\pm 2.6}$ & $54.8_{\pm 4.9}$ & $59.3_{\pm 0.6}$ & $59.6_{\pm 3.5}$ & $60.5_{\pm 3.6}$ & $57.9_{\pm 2.6}$ & \textcolor{blue}{$60.9_{\pm 2.5}$} \\
\textbf{Beef} & $69.3_{\pm 5.9}$ & $79.0_{\pm 3.7}$ & $73.0_{\pm 3.1}$ & $48.7_{\pm 5.8}$ & \textcolor{blue}{$87.0_{\pm 3.3}$} & \textcolor{blue}{$87.0_{\pm 4.8}$} & $77.7_{\pm 5.6}$ & $81.7_{\pm 2.7}$ \\
\textbf{Blink} & $66.5_{\pm 3.9}$ & $59.0_{\pm 2.6}$ & $58.2_{\pm 3.8}$ & $80.0_{\pm 3.9}$ & $73.0_{\pm 8.2}$ & $68.2_{\pm 8.1}$ & $63.7_{\pm 4.1}$ & \textcolor{blue}{$82.0_{\pm 3.7}$} \\
\textbf{Car} & $72.0_{\pm 2.1}$ & $64.8_{\pm 7.4}$ & $65.0_{\pm 4.2}$ & $78.0_{\pm 1.9}$ & $78.7_{\pm 3.5}$ & $74.8_{\pm 5.5}$ & $72.3_{\pm 5.5}$ & \textcolor{blue}{$79.1_{\pm 4.3}$} \\
\textbf{DDG} & $53.4_{\pm 3.7}$ & $47.6_{\pm 3.9}$ & $39.2_{\pm 4.6}$ & $56.2_{\pm 2.6}$ & $56.0_{\pm 5.1}$ & $45.8_{\pm 4.2}$ & $44.4_{\pm 3.2}$ & \textcolor{blue}{$58.2_{\pm 3.8}$} \\
\textbf{FordA} & $69.0_{\pm 1.3}$ & $63.8_{\pm 4.2}$ & $65.0_{\pm 1.6}$ & $68.6_{\pm 1.1}$ & $82.8_{\pm 1.5}$ & $64.5_{\pm 1.9}$ & $59.1_{\pm 2.5}$ & \textcolor{blue}{$88.9_{\pm 0.7}$} \\
\textbf{FordB} & $60.9_{\pm 1.0}$ & $56.9_{\pm 0.7}$ & $56.6_{\pm 1.1}$ & $61.1_{\pm 0.9}$ & $68.5_{\pm 1.7}$ & $55.0_{\pm 2.9}$ & $55.5_{\pm 3.2}$ & \textcolor{blue}{$72.6_{\pm 2.0}$} \\
\textbf{HMD} & $24.7_{\pm 3.5}$ & $27.6_{\pm 2.7}$ & $27.0_{\pm 6.1}$ & $28.8_{\pm 3.1}$ & $41.5_{\pm 4.4}$ & $31.9_{\pm 6.3}$ & $29.7_{\pm 3.7}$ & \textcolor{blue}{$42.0_{\pm 6.4}$} \\
\textbf{Libras} & $76.0_{\pm 1.7}$ & $74.9_{\pm 3.4}$ & $76.6_{\pm 1.7}$ & $75.3_{\pm 2.1}$ & $80.6_{\pm 2.1}$ & $73.0_{\pm 3.8}$ & $71.2_{\pm 4.2}$ & \textcolor{blue}{$80.7_{\pm 1.7}$} \\

\textbf{Mallat} & $87.2_{\pm 0.9}$ & $78.1_{\pm 6.8}$ & $86.6_{\pm 2.1}$ & $87.7_{\pm 0.9}$ & \textcolor{blue}{$88.8_{\pm 2.7}$} & $86.3_{\pm 3.8}$ & $86.9_{\pm 2.6}$ & \textcolor{blue}{$88.8_{\pm 2.9}$} \\

\textbf{OSULeaf} & $52.0_{\pm 1.5}$ & $41.1_{\pm 1.5}$ & $47.6_{\pm 2.9}$ & $51.5_{\pm 2.8}$ & \textcolor{blue}{$59.6_{\pm 2.4}$} & $52.8_{\pm 3.3}$ & $51.7_{\pm 1.9}$ & $59.3_{\pm 2.6}$ \\
\textbf{Wine} & $69.3_{\pm 5.9}$ & $80.4_{\pm 6.4}$ & $81.3_{\pm 4.9}$ & $68.5_{\pm 3.3}$ & $81.5_{\pm 2.5}$ & \textcolor{blue}{$86.1_{\pm 4.9}$} & $84.3_{\pm 2.5}$ & $82.2_{\pm 2.1}$ \\
\bottomrule
\end{tabular}
\label{tab:classification_results}
\end{center}
\end{table*}

\begin{figure*}[htbp]
    \centering
    \begin{minipage}[t]{\textwidth}    
        \begin{minipage}[b]{0.5\textwidth}
            \centering
            \captionsetup[subfloat]{width=\linewidth}
            \subfloat[Performance change relative to $N_{m}$]{\includegraphics[scale=1.0, left]{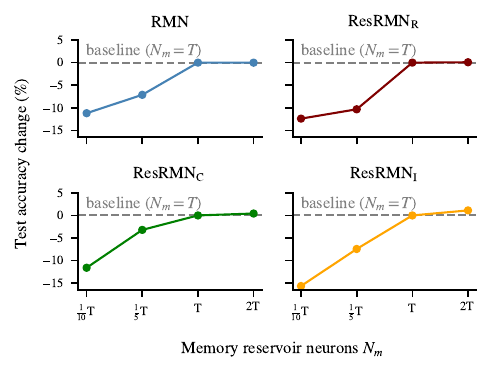}
            \label{fig:performance_change_memunits}}
        \end{minipage}\hfill
        \begin{minipage}[b]{0.5\textwidth}
            \centering
            \captionsetup[subfloat]{width=\linewidth}
            \subfloat[Performance change relative to LeakyESN]{\includegraphics[scale=1.0, center]{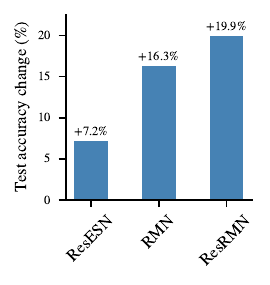}
            \label{fig:performance_change_leakyesn}}
        \end{minipage}\hfill
    \end{minipage}
    \caption{(a) Performance change in time-series classification tasks relative to the number of memory reservoir neurons $N_{m}$, with the base case being $N_{m} = T$. Results are broken down for non-residual and residual reservoir memory networks and averaged across all datasets. (b) Performance change in time-series classification tasks relative to LeakyESN. Results are broken down by model class and averaged across all datasets.}
    \label{fig:performance_changes}
\end{figure*}

\subsection{Model selection}
Model selection is carried out by means of random search. We explore up to $1000$ configurations, with a maximum runtime of $24$h. Results are averaged across $10$ random initializations for time-series classification tasks and $5$ random initializations for pixel-level classification tasks. Note that, for models that have number of possible configurations lower than $1000$ a grid-search is performed. During model selection we use $N_h = 100$ units in the (non-linear) reservoir and $N_m = T$ units in the linear memory cell of RMNs and ResRMNs, where $T$ is the sequence length of the task. The other hyperparameters were explored according to the values highlighted in Table \ref{tab:model_selection}, where $\omega_{x_{m}}$ and $\omega_{x}$ are the scaling coefficients for the input weight matrix of the linear and non-linear module respectively, $\omega_m$ and $\omega_b$ are the scaling coefficients of the memory weight matrix and the bias vector of the non-linear module respectively, and $\rho$ is the desired spectral radius used to rescale the recurrent weight matrix of the non-linear module. For the readout we use the ridge regression implementation from the \emph{Scikit-learn} library, employing a Singular Value Decomposition (SVD) solver, which is fed, in all cases, the state at the last time step. Note that, prior to the readout, the states are transformed by applying \emph{Scikit-learn}'s standard scaler, which removes the mean and scales to unit variance. For each configuration, we explore the following values for the regularization coefficient $\lambda$ of the readout: $[0, 0.01, 0.1, 1, 10, 100]$. The best configuration is chosen based on the classification accuracy achieved on the validation set, which is obtained by applying a $70-30$ stratified split over the training set for time-series classification datasets and a $95-5$ split for the psMNIST dataset. The final results are obtained by re-training models on the original training set.

\begin{figure*}[htbp]
    \centering
    \includegraphics[scale=1.0]{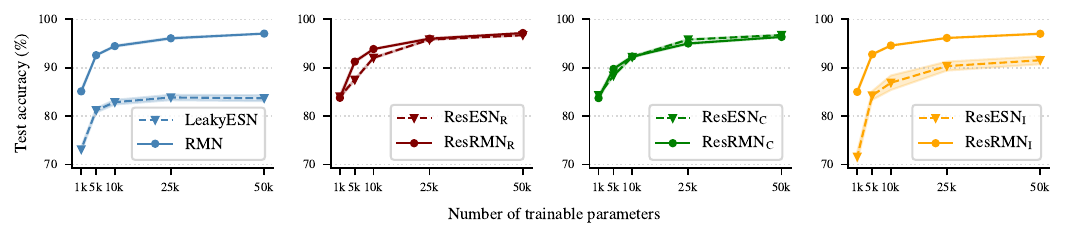}
    \caption{Test results on psMNIST for a varying number of trainable parameters. For RMN and ResRMNs, the number of memory reservoir neurons $N_{m}$ is fixed to $784$. Reported results represent the average, and the corresponding standard deviation, across $5$ random initializations.}
    \label{fig:results_psMNIST}
\end{figure*}

\subsection{Results on time-series classification tasks}
Table \ref{tab:classification_results} presents the results for the considered classification tasks for a hidden size of $N_{h} = 100$. To ensure a fair comparison, all models have the same number of trainable parameters. ResRMN demonstrates superior performance in the majority of datasets, particularly when taking into account the configuration employing the identity matrix in the temporal residual connections. Furthermore, the configuration that employs the cyclic orthogonal matrix in both modules is generally inferior even to the random orthogonal configuration. In some datasets, we observe high degrees of variability (i.e., standard deviations). We argue that this variability may be related to the intrinsic characteristics of each specific dataset. In future work, we plan to investigate in detail how this variability changes as the number of units in the reservoir increases.

Fig. \ref{fig:performance_changes} (right) summarizes the average test accuracy improvements achieved by each class of models (ResESN, RMN, and ResRMN) compared to the LeakyESN baseline across all time-series classification datasets. ResRMN, on average, outperforms the LeakyESN by a $19.9\%$ gain in test accuracy, while also surpassing both ResESN and RMN. Finally, Fig. \ref{fig:performance_changes} (left) analyzes how the number of memory reservoir's neurons $N_m$ affects performance in RMNs and ResRMNs. More specifically, we trained these models for values of $N_{m}$ in $[\frac{1}{10}T, \frac{1}{5}T, T, 2T]$, where $T$ is the sequence length of the dataset, computed performance changes relative to the base case when $N_m = T$, and averaged these changes across all datasets. We observe that $N_{m}$ has a crucial effect on performance, with all modular models decreasing in performance by at least $10\%$ for the lowest number of memory neurons considered (i.e., $N_{m} = \frac{1}{10}T$). In the future, we plan to analyze, with respect to $N_{m}$, the trade-off between performance and training time to investigate the overhead introduced by leveraging two reservoirs rather than one.

\subsection{Results on pixel-level classification tasks}
For psMNIST, we run experiments for an increasing number of trainable parameters from $\approx 1k$ to $\approx 50k$, using the optimal configuration identified during model selection with $N_{h} = 100$. We adjust the hidden size to reach the desired parameter count, while tuning the readout's regularization coefficient via validation set to prevent overfitting. Throughout these experiments, the memory reservoir neuron count $N_{m}$ remains fixed at the MNIST's sequence length $T = 784$. Fig. \ref{fig:results_psMNIST} presents the results of the aforementioned experiment. Results show minimal variation among RMN and the different ResRMN configurations, as we can observe similar performance across all trainable parameter ranges. Notably, we observe that models employing a dual-reservoir approach generally improve upon their single reservoir counterparts, especially those employing the identity matrix, such as RMN and ResRMN$_{\mathrm{I}}$. These results suggest that orthogonal configurations have, in the context of the psMNIST task, strong performance even without a memory module, while configurations using identity matrices fall behind without the additional linear reservoir.

\section{Conclusions}
\label{sec:conclusions}
In this paper, we introduced Residual Reservoir Memory Networks (ResRMNs), a novel class of models within the RC paradigm, aimed at enhancing long-term propagation of the input and performance on time-series classification tasks. The proposed approach is based on the combination of a linear memory reservoir and a non-linear ResESN module, where the latter is a particular class of RC models that take advantage of temporal residual connections. The linear reservoir is conceptually optimized to enhance memorization capabilities, while the ResESN module is optimized for complex, non-linear processing. Additionally, the two have independent hyperparameters, allowing for more flexibility in terms of tuning compared to other RC approaches based on a single reservoir, such as LeakyESN or ResESN. 

Leveraging linear stability analysis tools, we studied the resulting dynamics of the proposed approach. Additionally, to gain insights on how the different configurations affect eigenvalues' distribution and possibly system dynamics, we investigated the eigenspectrum of the resulting Jacobian.

Experimental results on time-series classification tasks demonstrate the superior performance of the proposed approach, which outperforms other RC models in the majority of datasets. In particular, the configuration employing the identity matrix is generally the best performer.

In future work, we plan to experiment with different initialization approaches for the memory cell's recurrent weight matrix and more advanced architectures for the linear memory reservoir. Future work could also analyze the eigenvalues of the proposed model in terms of their polar form (magnitude and angle) rather than their cartesian form (real and imaginary parts). Leveraging this different perspective could provide insights into the differences between the orthogonal configurations and the identity one, particularly in understanding the rotational dynamics and stability properties introduced by each configuration.

\section*{Acknowledgments}
Computational resources were provided by Computing@Unipi, a computing service of the University of Pisa.

\bibliographystyle{IEEEtran}
\bibliography{bibliography}

\begin{thebibliography}{10}
\providecommand{\url}[1]{#1}
\csname url@samestyle\endcsname
\providecommand{\newblock}{\relax}
\providecommand{\bibinfo}[2]{#2}
\providecommand{\BIBentrySTDinterwordspacing}{\spaceskip=0pt\relax}
\providecommand{\BIBentryALTinterwordstretchfactor}{4}
\providecommand{\BIBentryALTinterwordspacing}{\spaceskip=\fontdimen2\font plus
\BIBentryALTinterwordstretchfactor\fontdimen3\font minus \fontdimen4\font\relax}
\providecommand{\BIBforeignlanguage}[2]{{%
\expandafter\ifx\csname l@#1\endcsname\relax
\typeout{** WARNING: IEEEtran.bst: No hyphenation pattern has been}%
\typeout{** loaded for the language `#1'. Using the pattern for}%
\typeout{** the default language instead.}%
\else
\language=\csname l@#1\endcsname
\fi
#2}}
\providecommand{\BIBdecl}{\relax}
\BIBdecl

\bibitem{nakajima2021reservoir}
K.~Nakajima and I.~Fischer, \emph{Reservoir computing}.\hskip 1em plus 0.5em minus 0.4em\relax Springer, 2021.

\bibitem{lukovsevivcius2009reservoir}
M.~Luko{\v{s}}evi{\v{c}}ius and H.~Jaeger, ``Reservoir computing approaches to recurrent neural network training,'' \emph{Computer science review}, vol.~3, no.~3, pp. 127--149, 2009.

\bibitem{tanaka2019recent}
G.~Tanaka, T.~Yamane, J.~B. H{\'e}roux, R.~Nakane, N.~Kanazawa, S.~Takeda, H.~Numata, D.~Nakano, and A.~Hirose, ``Recent advances in physical reservoir computing: A review,'' \emph{Neural Networks}, vol. 115, pp. 100--123, 2019.

\bibitem{yan2024emerging}
M.~Yan, C.~Huang, P.~Bienstman, P.~Tino, W.~Lin, and J.~Sun, ``Emerging opportunities and challenges for the future of reservoir computing,'' \emph{Nature Communications}, vol.~15, no.~1, p. 2056, 2024.

\bibitem{milano2022materia}
G.~Milano, G.~Pedretti, K.~Montano, S.~Ricci, S.~Hashemkhani, L.~Boarino, D.~Ielmini, and C.~Ricciardi, ``In materia reservoir computing with a fully memristive architecture based on self-organizing nanowire networks,'' \emph{Nature materials}, vol.~21, no.~2, pp. 195--202, 2022.

\bibitem{pistolesi2024nanowire}
V.~Pistolesi, A.~Ceni, C.~Gallicchio, G.~Milano, and C.~Ricciardi, ``Nanowire neural networks for time-series processing,'' in \emph{NeurIPS Workshop Machine Learning with new Compute Paradigms}, 2024.

\bibitem{gallicchio2017deepesn}
C.~Gallicchio, A.~Micheli, and L.~Pedrelli, ``Deep reservoir computing: A critical experimental analysis,'' \emph{Neurocomputing}, vol. 268, pp. 87--99, 2017.

\bibitem{gallicchio2024euler}
C.~Gallicchio, ``Euler state networks: Non-dissipative reservoir computing,'' \emph{Neurocomputing}, vol. 579, p. 127411, 2024.

\bibitem{ceni2024residual}
A.~Ceni and C.~Gallicchio, ``Residual echo state networks: Residual recurrent neural networks with stable dynamics and fast learning,'' \emph{Neurocomputing}, p. 127966, 2024.

\bibitem{jaeger2004harnessing}
H.~Jaeger and H.~Haas, ``Harnessing nonlinearity: Predicting chaotic systems and saving energy in wireless communication,'' \emph{science}, vol. 304, no. 5667, pp. 78--80, 2004.

\bibitem{jaeger2007leakyesn}
H.~Jaeger, M.~Luko{\v{s}}evi{\v{c}}ius, D.~Popovici, and U.~Siewert, ``Optimization and applications of echo state networks with leaky-integrator neurons,'' \emph{Neural networks}, vol.~20, no.~3, pp. 335--352, 2007.

\bibitem{gallicchioreservoir}
C.~Gallicchio and A.~Ceni, ``Reservoir memory networks,'' \emph{ESANN}, 2024.

\bibitem{rodan2010minimum}
A.~Rodan and P.~Tino, ``Minimum complexity echo state network,'' \emph{IEEE transactions on neural networks}, vol.~22, no.~1, pp. 131--144, 2010.

\bibitem{verzelli2020input}
P.~Verzelli, C.~Alippi, L.~Livi, and P.~Tino, ``Input representation in recurrent neural networks dynamics,'' \emph{arXiv preprint arXiv:2003.10585}, 2020.

\bibitem{tino2020dynamical}
P.~Tino, ``Dynamical systems as temporal feature spaces,'' \emph{Journal of Machine Learning Research}, vol.~21, no.~44, pp. 1--42, 2020.

\bibitem{jaeger2001memory}
H.~Jaeger, ``Short term memory in echo state networks,'' 2001.

\bibitem{ceni2024edge}
A.~Ceni and C.~Gallicchio, ``Edge of stability echo state network,'' \emph{IEEE Transactions on Neural Networks and Learning Systems}, 2024.

\bibitem{jaeger2001echo}
H.~Jaeger, ``The “echo state” approach to analysing and training recurrent neural networks-with an erratum note,'' \emph{Bonn, Germany: German National Research Center for Information Technology GMD Technical Report}, vol. 148, no.~34, p.~13, 2001.

\bibitem{bertschinger2004real}
N.~Bertschinger and T.~Natschl{\"a}ger, ``Real-time computation at the edge of chaos in recurrent neural networks,'' \emph{Neural computation}, vol.~16, no.~7, pp. 1413--1436, 2004.

\bibitem{dau2019ucr}
H.~A. Dau, A.~Bagnall, K.~Kamgar, C.-C.~M. Yeh, Y.~Zhu, S.~Gharghabi, C.~A. Ratanamahatana, and E.~Keogh, ``The ucr time series archive,'' \emph{IEEE/CAA Journal of Automatica Sinica}, vol.~6, no.~6, pp. 1293--1305, 2019.

\bibitem{lecun1998mnist}
Y.~LeCun, ``The mnist database of handwritten digits,'' \emph{http://yann. lecun. com/exdb/mnist/}, 1998.

\end{thebibliography}

\end{document}